%% file: ARXIV_VERSION.tex
\documentclass[
]{ceurart}

\usepackage{listings}

\usepackage{graphicx} 

\usepackage{amsthm}
\usepackage{float}
\usepackage{amssymb}
\usepackage{tabularx}
\usepackage{subcaption}

\newtheorem{theorem}{Theorem}[section]

\theoremstyle{definition}
\newtheorem{definition}{Definition}
\theoremstyle{observation}

\theoremstyle{example}

\begin{document}

\copyrightyear{2026}
\copyrightclause{Copyright for this paper by its authors. Use permitted under Creative Commons License Attribution 4.0 International (CC BY 4.0).}


\title{AI-Assisted Discovery of CNOT Decompositions for Quantum Permutations on Constrained Topologies}

\author[1,3]{Daniele {Lizzio Bosco}}[%
orcid=0009-0002-7372-6518,
email=lizziobosco.daniele@spes.uniud.it,
]

\author[1]{Jacopo Cossio}[%
orcid=0009-0006-1132-730X,
email=cossio.jacopo@spes.uniud.it,
]

\author[2]{Simone Faro}[%
orcid=0000-0001-5937-5796,
email=simone.faro@unict.it,
]

\author[2]{Francesco Pio Marino}[%
orcid=0000-0003-4722-9542,
email=francesco.marino@phd.unict.it,
]

\author[2,4]{Gabriele Messina}[%
orcid=0009-0004-9926-3691,
email=gabriele.messina@phd.unict.it,
]

\author[1]{Carla Piazza}[%
orcid=0000-0002-2072-1628,
email=carla.piazza@uniud.it,
]

\author[1]{Giuseppe Serra}[%
orcid=0000-0002-4269-4501,
email=giuseppe.serra@uniud.it,
]

\address[1]{Department of Mathematical, Computer and Physical Sciences, University of Udine, Via delle Scienze 206, 33100 Udine, Italy}

\address[2]{Department of Mathematics and Computer Science, University of Catania, Viale Andrea Doria 6, 95125 Catania, Italy}

\address[3]{Department of Biology, University of Naples Federico II, Corso Umberto I 40, 80138 Napoli, Italy}

\address[4]{Catania Astrophysical Observatory, Istituto Nazionale di Astrofisica,  Via Santa Sofia 78, 95123 Catania, Italy}


\fntext[1]{Supported by University of Catania, progetto PIACERI 2024-2026 - Linea di Intervento 1}

\begin{abstract}
Efficient quantum circuit compilation is a fundamental challenge for current quantum computers, where limited qubit connectivity can introduce substantial overhead in the implementation of logical operations. Recent AI-assisted synthesis methods have shown considerable potential for reducing circuit complexity, but their role in uncovering principles that can guide the design of new deterministic compilation algorithms remains largely unexplored. In this work, we investigate this perspective for the synthesis of quantum permutations into CNOT circuits under constrained qubit connectivity. Using AlphaCNOT as an exploratory tool, we identify a recurring decomposition for remote transpositions that improves on the standard construction based on adjacent SWAP routing. For two qubits at graph distance $d>1$, the discovered pattern yields a topology-legal implementation using at most $6d-4$ CNOT gates, compared with the $6d-3$ gates required by the corresponding SWAP-based construction. We formalize this observation through a constructive proof and use it to define Global Transpositions, a deterministic synthesis method that searches for a minimum-cost decomposition of an entire permutation into topology-aware logical transpositions. The resulting method subsumes optimal SWAP routing within its synthesis model and provides certified upper bounds on the topology-constrained CNOT cost. Our experiments also reveal optimized circuits that cannot be expressed as concatenations of independently synthesized transpositions. Instead, these circuits exploit intermediate linear transformations that temporarily leave the permutation subgroup and combine partial effects of multiple logical operations, a mechanism we call intermediate CNOT fusion. Overall, our results show that AI-assisted synthesis can support not only circuit optimization, but also the discovery of reusable identities, provable constructions, and previously hidden mechanisms in quantum circuit compilation.
\end{abstract}

\begin{keywords}
Quantum circuits \sep 
CNOT Optimization \sep  
AI-assisted discovery \sep
Quantum optimization \sep
Linear reversible circuits
\end{keywords}

\conference{AIQxQIA 2026: International Workshop on AI for Quantum and Quantum for AI | co-located with AIxIA 2026, Perugia, Italy}\maketitle


\input{sections/intro}
\input{sections/background}



\input{sections/discovery_local_primitives}

\input{sections/global_transposition_algorithm}

\input{sections/intermediate_fusion}

\input{sections/discussion}

\section*{Acknowledgements}
JC is supported by FSE/FVG PhD Grant on ``Computer Science and Artificial Intelligence'' (CUP G23C25000620008).
This work has been partially supported by INdAM-GNCS project \emph{Algebra lineare quantistica, state preparation e compilazione di circuiti quantistici} (CUP E53C25002010001) and by the regional project QUASAR-FVG \emph{Calcolo e simulazione quantistica: sviluppo, applicazioni e ricerca in Friuli Venezia Giulia} (CUP G23C25001510002).


\section*{Declaration on Generative AI}
During the preparation of this work, the authors used ChatGPT in order to perform grammar and spelling check, and to paraphrase and reword certain paragraphs. 
After using this tool, the authors reviewed and edited the content as needed and take full responsibility for the publication’s content.

\bibliography{biblio}

\end{document}

%% file: sections/intro.tex
\section{Introduction}

Limited qubit connectivity remains one of the main obstacles to the
efficient execution of quantum circuits on current hardware. A quantum
algorithm is typically designed in terms of logical qubits with no
restrictions on which pairs may interact, whereas a physical quantum
processor only permits two-qubit gates between adjacent vertices of a
coupling graph \cite{davis_towards_2020, kissinger_cnot_2020, wu_optimization_2023}. Consequently, the practical cost of executing a quantum
algorithm depends not only on its logical circuit, but also on how that
circuit is compiled onto the target topology.

A standard approach to connectivity-constrained compilation is to insert
SWAP gates whenever two logical qubits that must interact are not
physically adjacent. Since a SWAP is commonly decomposed into three CNOT
gates, minimizing the number of SWAPs provides a natural surrogate
objective for reducing the two-qubit gate count. This approach is
particularly relevant on current quantum devices, where two-qubit gates
typically dominate both the execution error and the compilation
overhead.

For permutation circuits, the relationship between SWAP and CNOT
synthesis is particularly direct. A permutation directly describes a
rearrangement of qubit states and can therefore be implemented as a
sequence of transpositions. In the absence of connectivity constraints,
this relationship is well understood. Bu \emph{et al.}~\cite{Bu_2025}
proved that a permutation of $n$ elements consisting of $k$ disjoint
cycles admits an optimal realization using exactly $3(n-k)$ CNOT gates.
Thus, under all-to-all connectivity, the optimal CNOT cost is completely
determined by the cycle structure of the target permutation.

Under constrained connectivity, however, SWAP minimization and CNOT
minimization are distinct problems. Chen \emph{et
al.}~\cite{Chen_2022} showed that a routing solution containing the
minimum possible number of SWAP gates does not necessarily yield a
minimum-CNOT implementation after each SWAP is expanded independently.
Although this establishes a separation between the two optimization
objectives, it leaves open a more structural question: \textit{which
opportunities available to direct CNOT synthesis are excluded when the
search is restricted to SWAP-based routing?}

We investigate this question using the constant-depth circuits for
quantum rotations and permutations introduced by Faro \emph{et
al.}~\cite{FaroPV24}. 
These constructions provide depth-optimal
implementations through structured layers of disjoint transpositions, and therefore constitute a strong
benchmark for determining whether circuits already optimized at the
permutation level can be further improved at the CNOT level.

To analyze these circuits, we employ AlphaCNOT~\cite{alphacnot}, an
AI-based optimizer that searches directly in the space of
topology-legal CNOT circuits. Unlike SWAP-based routing, AlphaCNOT is not
restricted to intermediate transformations that are themselves
permutations. Rather than treating the optimizer only as a black box, we
inspect its outputs to identify recurring structures that can be
formalized and incorporated into deterministic synthesis methods.

Our first observation concerns remote transpositions. A transposition
between two qubits at graph distance $d>1$ can be implemented through
$2d-1$ adjacent SWAPs, requiring $6d-3$ CNOT gates. AlphaCNOT repeatedly
produces implementations using $6d-4$ CNOTs. We formalize this pattern
through a constructive proof, showing that every such transposition
admits a topology-legal implementation of length at most $6d-4$. This
provides a reusable local primitive that improves by one CNOT over the
corresponding SWAP-based construction.

Building on this primitive, we introduce \emph{Global Transpositions}, a
deterministic synthesis method for arbitrary permutations. Each logical
transposition is assigned a topology-dependent CNOT cost, and the target
permutation is synthesized through a minimum-cost path in a weighted
Cayley graph of the symmetric group. The method is exact within the
restricted model of circuits obtained by concatenating independently
synthesized transpositions. Since the generator set includes all adjacent SWAPs, the CNOT count produced by Global Transpositions is never greater than that obtained by expanding an optimal SWAP routing into three-CNOT blocks.

Global Transpositions is more expressive than SWAP routing, but it still
requires the circuit to return to the permutation subgroup after every
logical transposition. AlphaCNOT reveals that this restriction can leave
additional optimization opportunities unexplored: for several target
permutations, direct synthesis produces circuits shorter than those
represented by the Global Transpositions model.

These additional reductions arise when the partial effects of multiple
logical operations are synthesized jointly. Intermediate prefix
transformations may temporarily leave the permutation subgroup
$S_n \subset GL(n,\mathbb{F}_2)$
and traverse more general linear reversible transformations before
returning to the target permutation. This allows partial effects to be
shared across the boundaries of independently synthesized
transpositions. We refer to this mechanism as \emph{Intermediate CNOT
Fusion}.

This phenomenon suggests a possible explanation for the separation
between SWAP-based and direct CNOT synthesis. SWAP routing and Global
Transpositions search within the permutation subgroup, whereas direct
CNOT synthesis can exploit the larger space of invertible binary linear
transformations. Intermediate states outside the permutation subgroup
can therefore provide shorter paths to a target permutation.

The contributions of this paper are threefold:
\begin{itemize}
    \item We give a constructive topology-legal implementation of a
    transposition at graph distance $d>1$ using at most $6d-4$ CNOT
    gates.

    \item We introduce \emph{Global Transpositions}, a deterministic
    synthesis method that computes a minimum-cost decomposition into
    independently synthesized logical transpositions and is never worse
    than an optimal SWAP routing expanded into CNOT gates.

    \item We experimentally investigate
    \emph{Intermediate CNOT Fusion}, through which direct CNOT synthesis
    combines partial effects of multiple logical operations using
    intermediate transformations outside the permutation subgroup.
\end{itemize}

Overall, our results show how AI-assisted optimization can support the
discovery of deterministic compilation principles.
In this context, AlphaCNOT acts not only as an optimizer, but as a discovery tool. It reveals
two levels of structure: a local remote-transposition saving, which we formalized
constructively and used to construct the Global Transpositions algorithm, and a more global fusion phenomenon, which is experimentally visible
but not yet fully characterized.

%% file: sections/background.tex
\section{Background}

This section reviews the background necessary for the remainder of the paper. We first recall the family of constant-depth quantum circuits for rotations and permutations introduced in~\cite{FaroPV24}, which serves as the baseline for our study. We then briefly describe the AI-based optimization framework employed throughout the paper~\cite{alphacnot}. In both cases, we focus only on the concepts and results directly relevant to our analysis, referring the reader to the original works for complete technical details.

\subsection{Constant-Depth Quantum Rotations and Permutations}

Let $R=(q_0,q_1,\ldots,q_{n-1})$ be a quantum register of $n$ qubits. A rotation of magnitude $k$ cyclically shifts the qubit states by $k$ positions, while a permutation rearranges the qubits according to an arbitrary permutation $\pi\in S_n$. These operations are fundamental building blocks in quantum algorithms and compilation, as they frequently arise when adapting logical circuits to the connectivity constraints of quantum hardware.

Faro \emph{et al.}~\cite{FaroPV24} introduced a constructive framework for implementing both rotations and arbitrary permutations by means of logspace-uniform families of quantum circuits having constant depth and linear size. The construction exploits the decomposition of a rotation into two reflections, each implemented as a layer of pairwise disjoint SWAP operations that can be executed completely in parallel. Consequently, every rotation is realized using only two layers of parallel SWAP gates, yielding a constant-depth circuit whose depth is independent of the size of the quantum register. 

The construction naturally extends from rotations to arbitrary permutations. Any permutation is first decomposed into disjoint cycles, and each cycle is implemented as a cyclic rotation acting only on the qubits belonging to that cycle. Since disjoint cycles operate on disjoint subsets of qubits, all cycle implementations can be executed simultaneously, preserving constant circuit depth while maintaining linear circuit size. This establishes that arbitrary qubit permutations belong to the complexity class $\mathrm{QNC}^{0}$. 

The circuits introduced in~\cite{FaroPV24} constitute the baseline considered throughout this paper. Their highly structured SWAP-based construction, based on layers of pairwise disjoint transpositions, provides a natural benchmark for investigating whether additional reductions can be achieved when synthesis is performed directly at the CNOT level. Rather than modifying the routing itself, our work analyzes how AI-based CNOT synthesis transforms these SWAP-optimal circuits and studies the structural properties of the resulting CNOT decompositions.

\subsection{AI-Based CNOT Optimization}

Artificial intelligence has recently emerged as a promising approach for
quantum circuit synthesis, optimization, and compilation. Reinforcement
learning methods have been applied to hardware-aware transpilation and CNOT
minimization \cite{kremer_practical_2024,romanello_cnot_2025,alphacnot}, while other works
have shown that AI-based search can discover non-trivial circuit structures
and compilation strategies. Examples include AlphaTensor for quantum circuit
optimization \cite{ruiz_quantum_2025}, reinforcement-learning-based
discovery of fault-tolerant state-preparation circuits
\cite{zen_quantum_2025}, reinforcement-learning based methods for synthesis and transpilation of permutation circuits \cite{AI_for_permutations_topologies}, RL-based methods for Clifford synthesis \cite{alphacliff, equivariantCliff}, and Q-PRESYN, where learned local circuit
transformations improve the outcome of a subsequent synthesis procedure
\cite{bosco_quantum_2026}. More recently, AI-driven search has also been
investigated for automated quantum algorithm discovery
\cite{finger_automated_2026}. These results suggest that AI-generated solutions
can be useful not only as optimized outputs, but also as objects from which
reusable compilation principles may be identified.

\paragraph{AlphaCNOT.}
AlphaCNOT \cite{alphacnot} is a model-based reinforcement learning algorithm that tackles the CNOT synthesis as a sequential planning problem. 
A CNOT circuit can be represented by its parity matrix \(A \in GL(n,\mathbb{F}_2)\). Consequently,  CNOT synthesis is addressed by searching for a sequence of CNOT operations that reduces \(A\) to the identity matrix. Each state of the search therefore corresponds to an invertible binary matrix, while each action corresponds to the application of a CNOT gate, implemented as an elementary row operation over \(\mathbb{F}_2\).

To learn efficient decompositions (i.e., equivalent circuits with lower gate count), AlphaCNOT combines deep neural networks with Monte Carlo Tree Search (MCTS) to explicitly explore multiple candidate synthesis trajectories. The MCTS procedure iterates through the standard phases of selection, expansion, simulation, and backpropagation, progressively focusing the search on promising branches of the search tree. Its exploration is guided by two neural networks: a policy network, which assigns prior probabilities to all possible CNOT actions, and a value network, which estimates the quality of the current matrix state, without performing an exhaustive exploration. This model-based formulation allows AlphaCNOT to exploit the known circuit dynamics and perform look-ahead search, rather than committing to a single sequence of heuristic-based actions.

AlphaCNOT tackles both the unconstrained version of the problem, also referred to as linear reversible synthesis, and the topology-aware one.
In the latter setting, the action space is restricted to CNOT gates supported by the coupling graph, defined \emph{a priori} in the problem's formulation.

Finally, since the search process is nondeterministic, AlphaCNOT can be used in a ``multi-shot'' fashion: given the same circuit, we can repeat the synthesis process multiple times and select the best outcome.

In what follows, we use AlphaCNOT primarily as an exploratory tool. The decompositions provided are used to explore and inspect  recurring structures, and to formulate hypotheses that can subsequently be
formalized as deterministic constructions. This methodology motivates the
local transposition construction and the intermediate CNOT-fusion phenomena
studied in the following sections.

\paragraph{Experimental Details.}

Since AlphaCNOT performs a stochastic search, repeated inference runs on the
same target may return different valid CNOT decompositions. In what follows, we report CNOT decompositions and counts either in a \emph{single-shot} setting, in which a single synthesis run is performed, or in a \emph{best-of-ten} setting, in which the model is independently sampled ten
times and the solution with the smallest CNOT count is retained. 
\begin{figure}[hb]
    \centering
    \begin{subfigure}[t]{0.45\linewidth}
        \centering
        \includegraphics[width=\linewidth]{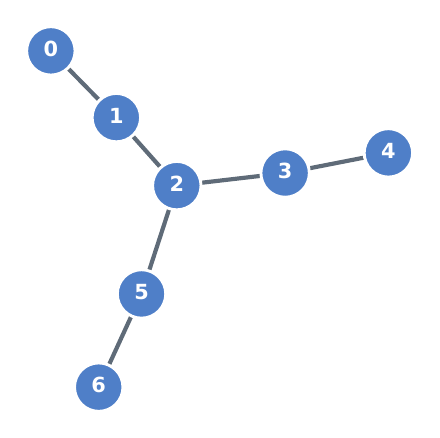}
        \caption{$7Y$ topology}
        \label{fig:topology-7y}
    \end{subfigure}
    \hfill
    \begin{subfigure}[t]{0.45\linewidth}
        \centering
        \includegraphics[width=\linewidth]{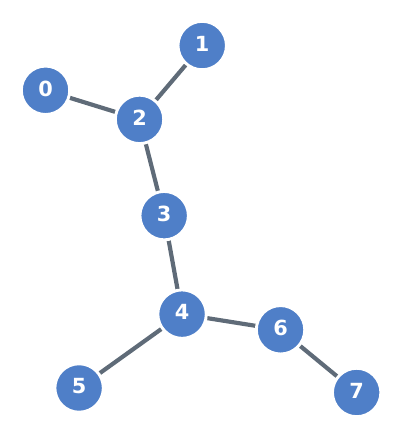}
        \caption{$8H$ topology}
        \label{fig:topology-8h}
    \end{subfigure}
    \caption{Examples of coupling topologies considered in our experiments.
    Vertices represent physical qubits, while edges denote the allowed
    two-qubit interactions. The full set of topologies can be found in \cite{kremer_practical_2024}.}
    \label{fig:representative-topologies}
\end{figure}

Figure~\ref{fig:representative-topologies} shows a few representative
coupling graphs used in the experiments.
We adopt the topology notation of~\cite{kremer_practical_2024}, in which
each coupling graph is identified by its number of qubits followed by a
letter denoting its connectivity family. Thus, for instance, $3L$ denotes a
three-qubit line, whereas $4T$ denotes a four-qubit T-shaped topology. We
extend the same naming convention to the other graph families considered in
our experiments.

The source code used to run AlphaCNOT, together with the trained models for
the coupling topologies considered in our experiments, is publicly available
in the AlphaCNOT GitHub repository.\footnote{\url{https://github.com/Jaccos01/AlphaCNOT}}

%% file: sections/discovery_local_primitives.tex
\section{Discovery of local primitives}
\label{sec:local_primitives}

As a starting point, we consider the problem of mapping a
target permutation consisting of a single transposition $\tau(i,j)$. Let
$d=d_G(i,j)$ denote the distance between the two endpoints in the coupling
graph $G$. A standard SWAP-based implementation moves one endpoint along a
shortest path, swaps it with the other, and moves it back. This requires
$2d-1$ adjacent SWAP gates and therefore $3(2d-1)=6d-3$ CNOT gates after
expanding each SWAP into three CNOTs. A closely related problem was studied
by Kutin et al.~\cite{computation_at_distance} in the linear
nearest-neighbor setting: their construction exchanges the two endpoints
of an $n$-wire line using $6n-9$ CNOT gates, which corresponds to $6d-3$
when the endpoint distance is $d=n-1$.

This bound, however, is not necessarily optimal with respect to CNOT count.
The smallest example is $\tau(0,2)$ on the three-qubit line $0-1-2$. The
SWAP-based construction requires three adjacent SWAPs, i.e., nine CNOT
gates, while the same permutation admits an eight-CNOT implementation, as
shown in Figure~\ref{fig:distance-two-transposition}.

\begin{figure}[t]
\centering
\includegraphics[width=\linewidth]{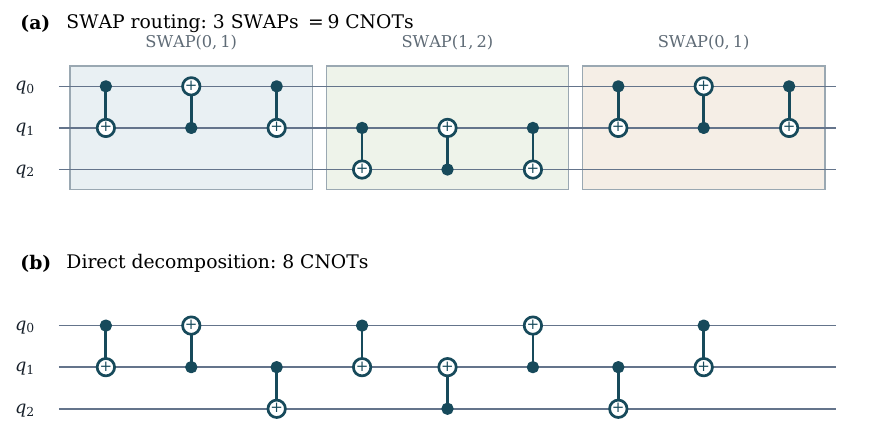}
\caption{The first local separation between SWAP routing and direct CNOT
synthesis.  The transposition $\tau(0,2)$ on the path $0-1-2$ costs $9$ CNOTs
when implemented as three adjacent SWAPs, but admits a direct topology-legal
implementation with $8$ CNOTs.}
\label{fig:distance-two-transposition}
\end{figure}

\paragraph{Empirical pattern.}
We evaluated the CNOT count found by AlphaCNOT on a variety of topologies for all transpositions. Interestingly, we noticed the same pattern (see Table \ref{tab:alphacnot-transposition-mean-by-topology-distance}): 
for most targets $\tau(i,j)$, AlphaCNOT finds a CNOT decomposition with CNOT count of
\[
    f(d) =
    \begin{cases}
        3, & d=1,\\
        6d-4, & d>1.
    \end{cases}
\]

\begin{table}[htb]
\centering
\setlength{\tabcolsep}{3pt}
\caption{Mean CNOT count found by AlphaCNOT (single shot) over all transpositions of distance $d$ for selected topologies as defined in \cite{kremer_practical_2024}. Values in bold represent averages that do not coincide with the formula $f(d)$, highlighting that in some cases AlphaCNOT was not able to obtain the observed upper bound.}
\label{tab:alphacnot-transposition-mean-by-topology-distance}
\begin{tabular}{lccccccc}
\hline
Topology & $d=1$ & $d=2$ & $d=3$ & $d=4$ & $d=5$ & $d=6$ & $d=7$ \\
\hline
Formula $f(d)$ & $3$ & $8$ & $14$ & $20$ & $26$ & $32$ & $38$ \\
4L & 3.00 & 8.00 & 14.00 &  &  &  &  \\
4Y & 3.00 & 8.00 &  &  &  &  &  \\
5L & 3.00 & 8.00 & 14.00 & 20.00 &  &  &  \\
5T & 3.00 & 8.00 & 14.00 &  &  &  &  \\
6L & 3.00 & 8.00 & 14.00 & 20.00 & 26.00 &  &  \\
6Y & 3.00 & 8.00 & 14.00 & 20.00 &  &  &  \\
7L & 3.00 & 8.00 & 14.00 & \textbf{20.33} & 26.00 & 32.00 &  \\
7Y & 3.00 & \textbf{8.17} & 14.00 & \textbf{20.67} &  &  &  \\
8F & 3.00 & 8.00 & 14.00 & \textbf{20.25} & 26.00 & 32.00 &  \\
8H & 3.00 & 8.00 & 14.00 & 20.00 & 26.00 &  &  \\
\hline
\end{tabular}%
\end{table}

The single-shot results in Table \ref{tab:alphacnot-transposition-mean-by-topology-distance} show that AlphaCNOT does not always find decompositions of length \(f(d)\) in a single inference run, as in a few cases (i.e. the ones in bold in Table \ref{tab:alphacnot-transposition-mean-by-topology-distance}), some average decompositions yield slightly increased CNOT counts. However, by increasing the computational budget by evaluating AlphaCNOT in a best-of-ten fashion, AlphaCNOT found exactly $f(d)$ for each transposition, suggesting that for each transposition $\tau(i,j)$ for $i,j$ of distance $d\geq2$, it is always possible to decompose $\tau(i,j)$ in $6d-4$ CNOTs.

Starting from this observation, we provide a constructive proof of the following theorem.

\begin{theorem}
Let $G$ be a topology and let $\tau(i,j)$ be a transposition whose
endpoints have distance $d_G(i,j)>1$ in $G$. Then $\tau(i,j)$ admits a topology-legal
CNOT implementation of length at most $6d-4$.
\end{theorem}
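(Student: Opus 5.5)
The plan is to reduce the general case to the distance-two case, using the eight-CNOT circuit of Figure~\ref{fig:distance-two-transposition} as the base case and conjugating it by adjacent SWAPs. Fix a shortest path $i=v_0,v_1,\ldots,v_d=j$ in $G$, with consecutive vertices adjacent. All gates will act only on pairs $(v_k,v_{k+1})$, so topology-legality is automatic.

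\textbf{Base case ($d=2$).} On the path $v_0-v_1-v_2$ I take the eight-CNOT circuit $C_2$ of Figure~\ref{fig:distance-two-transposition}, which realizes $\tau(v_0,v_2)$. Its correctness is a finite check. Multiplying the eight elementary parity matrices over $\mathbb{F}_2$ must give the permutation matrix sending $(x_0,x_1,x_2)\mapsto(x_2,x_1,x_0)$. Equivalently, one tracks the linear forms on the three wires gate by gate. Since the path is induced along consecutive vertices, only the edges $\{v_0,v_1\}$ and $\{v_1,v_2\}$ are used. The construction never relies on any other edge of $G$.

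\textbf{Reduction ($d>2$).} Let $P=\mathrm{SWAP}(v_{d-3},v_{d-2})\cdots\mathrm{SWAP}(v_1,v_2)\,\mathrm{SWAP}(v_0,v_1)$. This is $d-2$ adjacent SWAPs, each expanded into three CNOTs on an edge of the path. As a permutation of positions, $P$ is a cycle carrying the content of $v_0$ to $v_{d-2}$ and shifting $v_1,\ldots,v_{d-2}$ back by one. It fixes $v_{d-1}$, $v_d$ and every vertex off the path. I then apply $C_2$ on $v_{d-2}-v_{d-1}-v_d$, followed by $P^{-1}$ (the same SWAPs in reverse order). Using the conjugation identity $\sigma^{-1}\tau(a,b)\sigma=\tau(\sigma^{-1}(a),\sigma^{-1}(b))$ in $S_n$, the composite equals the transposition of the preimages of $v_{d-2}$ and $v_d$, namely $\tau(v_0,v_d)=\tau(i,j)$. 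Every intermediate qubit of the path is restored, because $P$ and $P^{-1}$ cancel away from the swapped pair. The gate count is $3(d-2)+8+3(d-2)=6d-4$, which proves the bound.

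The only delicate points are bookkeeping. First, the orientation convention for composing permutations with circuits must be fixed so that the conjugation really yields $\tau(v_0,v_d)$ and not some other transposition. Second, the base circuit $C_2$ must be verified exactly; I expect this to be the main obstacle, since it is the one place where a genuine non-SWAP identity is needed. If the figure's circuit were unavailable, I would search the eight-gate words over the four directed edges $CX(v_0,v_1)$, $CX(v_1,v_0)$, $CX(v_1,v_2)$, $CX(v_2,v_1)$ for one whose parity matrix is the antidiagonal permutation matrix. This search space is finite and small.

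The intuition for why such a word should exist comes from the four-gate identity $CX(v_0,v_1)CX(v_1,v_2)CX(v_0,v_1)CX(v_1,v_2)=CX(v_0,v_2)$, writing gates in order of application. Combining such remote CNOTs with adjacent SWAP halves lets boundary CNOTs cancel, saving one gate relative to $9$. Finally, since the argument uses only the path and not its surroundings in $G$, the bound $6d-4$ holds for every topology $G$ and every pair at distance $d>1$.
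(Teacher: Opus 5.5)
Your proposal is correct and is the paper's construction unrolled: the paper inducts on $d$, conjugating a circuit for $\tau(1,d)$ on the subpath $1-2-\cdots-d$ by the three-CNOT SWAP on the edge $(0,1)$, and iterating this yields exactly your circuit (apply $P$, then $C_2$, then $P^{-1}$) with $3(d-2)+8+3(d-2)=6d-4$ gates. The one thing you leave undone is the base case, which the paper makes self-contained by writing out the eight-gate word $X_{0,1}X_{1,0}X_{1,2}X_{0,1}X_{2,1}X_{1,0}X_{1,2}X_{0,1}$ (application order, $X_{s,t}$ having control $s$ and target $t$) and tracking $(x_0,x_1,x_2)\mapsto(x_2,x_1,x_0)$ gate by gate, so you should exhibit and check such a word rather than defer to the figure or to a search you have not carried out.
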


\begin{proof}
Let
\[
i=v_0,v_1,\ldots,v_d=j
\]
be a shortest path in $G$ from $i$ to $j$. Since every consecutive pair
$(v_k,v_{k+1})$ is connected by an edge of $G$, any CNOT circuit using only
edges of this path is automatically topology-legal on $G$. Thus it is enough
to construct the desired circuit on the line topology
\[
0-1-\cdots-d,
\]
after relabeling $v_k$ as $k$. In particular, it suffices to prove the
statement for the transposition $\tau(0,d)$ on this line.

Write $X_{a,b}$ for a CNOT with control $a$ and target $b$. All gate products
below are listed in application order. We proceed by induction on $d\ge2$. The base case is $d=2$. On the line
$0-1-2$, the following eight gates implement $\tau(0,2)$:
\[
X_{0,1}X_{1,0}X_{1,2}X_{0,1}
X_{2,1}X_{1,0}X_{1,2}X_{0,1}.
\]
Indeed, applying these gates to the row labels $(a,b,c)$ gives
\[
(a,b,c)
\mapsto (a,a+b,c)
\mapsto (b,a+b,c)
\mapsto (b,a+b,a+b+c)
\mapsto (b,a,a+b+c)
\]
\[
\mapsto (b,b+c,a+b+c)
\mapsto (c,b+c,a+b+c)
\mapsto (c,b+c,a)
\mapsto (c,b,a),
\]
which is exactly the transposition of the two endpoints.

For the induction step, assume that there is a CNOT circuit $C_{1,d}$
implementing $\tau(1,d)$ on the subpath $1-2-\cdots-d$, with length
$6(d-1)-4$. Let
\[
S_0 = X_{0,1}X_{1,0}X_{0,1}
\]
be the usual three-CNOT implementation of the adjacent transposition
$\tau(0,1)$. Then
\[
S_0\,C_{1,d}\,S_0
\]
implements $\tau(0,d)$, since at the permutation level
\[
(0\,1)(1\,d)(0\,1) = (0\,d).
\]
To see this directly, the two outer transpositions exchange the labels $0$
and $1$, so conjugating $(1,d)$ by $(0,1)$ simply replaces the endpoint
$1$ by $0$, while $d$ is fixed because $d>1$. Hence the resulting
transposition exchanges precisely $0$ and $d$.

Finally, the length of the circuit is
\[
3+\bigl(6(d-1)-4\bigr)+3=6d-4.
\]
This completes the induction and proves the claimed construction.
\end{proof}

A visualization of our construction for $d=4$ is provided in Figure \ref{fig:distance-four-transposition}, compared to an optimal output provided by AlphaCNOT. The linear behavior of the bound is also a natural consequence of the recursive construction. Increasing the distance by one amounts to surrounding the transposition at distance $d-1$ with two adjacent transpositions, each of which requires three CNOTs. Thus, each additional unit of distance contributes exactly six CNOTs, yielding the recurrence
$
L(d)=L(d-1)+6,
$
which, together with the base case $L(2)=8$, gives $L(d)=6d-4$.
 
It is important to emphasize that the construction above establishes only an upper bound, as we do not prove that the length $6d-4$ is optimal. Nevertheless, the available computational evidence  supports its optimality. In addition to the experimental results obtained with AlphaCNOT, we performed exhaustive brute-force searches on all graph topologies considered in this work with up to five qubits.
In all cases considered, no topology-legal implementation shorter than $6d-4$ was found. This evidence leads us to conjecture that the bound is tight, although a general proof of optimality remains open.

\begin{figure}[t]
\centering
\includegraphics[width=\linewidth]{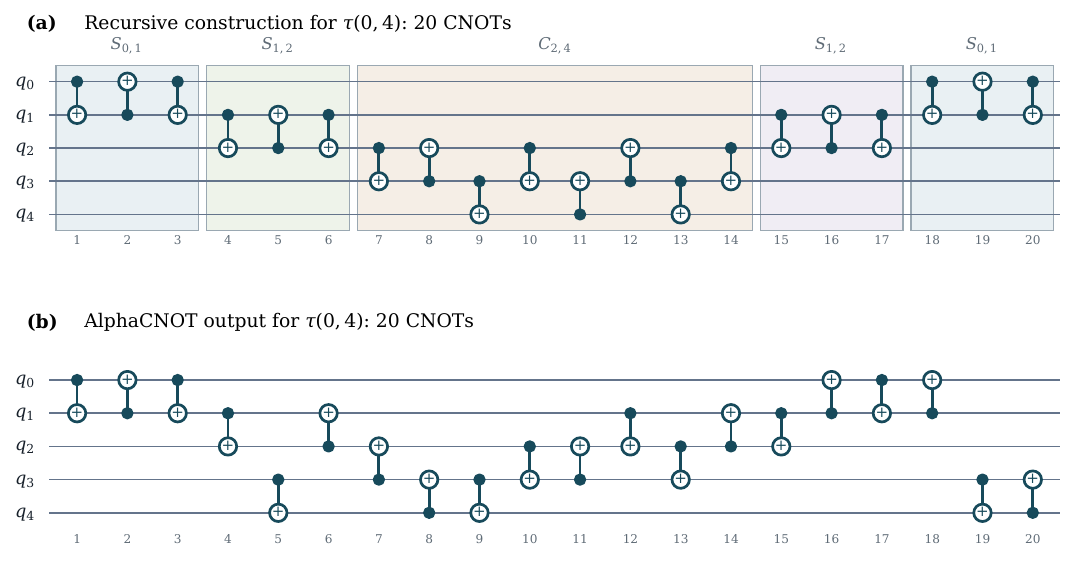}
\caption{Example of the recursive decomposition of $\tau(0,4)$ with distance $4$, compared to a decomposition found by AlphaCNOT. Note that in the decomposition below, it is easy to recognize the components corresponding to the two copies of the SWAP between $q_0$ and $q_1$ $S_{0,1}$ (i.e., the $6$ CNOTs between $q_0$ and $q_1$). However, the remaining transposition $\tau(1,4)$ cannot be easily decomposed  into $S_{1,2}$ and $C_{2,4}$.}
\label{fig:distance-four-transposition}
\end{figure}

At this point, the construction above assigns an explicit topology-dependent CNOT cost to every transposition of two qubits. This naturally suggests using remote transpositions as the basic building blocks for synthesizing arbitrary permutations, and choosing among them according to their certified CNOT costs. In the following section, we formalize this idea as a deterministic global synthesis algorithm.

%% file: sections/global_transposition_algorithm.tex
\section{Global Transpositions Algorithm}
\label{sec:global_transposition}

The previous result associates every logical transposition with a concrete topology-legal CNOT decomposition and, consequently, with an explicit CNOT cost. This allows us to move from the synthesis of individual transpositions to that of arbitrary permutations: instead of decomposing a permutation exclusively into adjacent SWAPs, we may search directly over all logical transpositions and weight each of them by the cost of its CNOT implementation.

Let $G=(V,E)$ be the coupling graph, with
$|V|=n$, and assume that every undirected edge supports both CNOT orientations.
For two vertices $i,j\in V$, let $d_G(i,j)$ be their graph distance.  The
previous construction gives a topology-legal CNOT implementation of the
transposition $\tau(i,j)$ with cost
\[
    w_G(i,j) =
    \begin{cases}
        3, & d_G(i,j)=1,\\
        6d_G(i,j)-4, & d_G(i,j)>1.
    \end{cases}
\]

Thus, every logical transposition can be assigned an explicit, certified CNOT
cost $w_G$, corresponding to a constructive decomposition in the same number of CNOTs.

Note that minimum-cost decompositions of permutations into transpositions with non-uniform costs have previously been studied in a general combinatorial setting \cite{farnoud_sorting_2012}. Our construction can be viewed as an instance of this framework in which transposition costs are induced by topology-constrained CNOT implementations.

\paragraph{Weighted Cayley graph.}
Define a weighted graph $\mathcal{T}_G$ whose vertices are the permutations in
$S_n$.  For every permutation $\pi\in S_n$ and every unordered pair
$\{i,j\}\subseteq V$, add an edge
\[
    \pi \longrightarrow \pi\circ\tau(i,j)
\]
with weight $w_G(i,j)$.  Since arbitrary transpositions generate $S_n$,
$\mathcal{T}_G$ is connected.  A path in $\mathcal{T}_G$ is a decomposition of
the target permutation into logical transpositions, and the path weight is a
guaranteed CNOT cost obtained by expanding each transposition with the
construction above.

We call the resulting method \emph{Global Transpositions}: it is global because
the decomposition is chosen over the whole target permutation, not independently
cycle by cycle.  In particular, if
\[
    \sigma = \tau_1\tau_2\cdots\tau_m
\]
is the path returned by the weighted search, then the CNOT cost reported by the
method is
\[
    \ell_{\mathrm{GT}}^G(\sigma)
    = \sum_{r=1}^{m} w_G(\tau_r).
\]

The corresponding concrete circuit is obtained by replacing each $\tau_r$ by
the CNOT block associated with a shortest path between its two endpoints.

\paragraph{Algorithm.}
For a fixed topology $G$, the algorithm is:
\begin{enumerate}
    \item compute all-pairs graph distances on $G$;
    \item assign each logical transposition $\tau(i,j)$ the weight $w_G(i,j)$;
    \item run Dijkstra from the identity permutation in the weighted Cayley
          graph $\mathcal{T}_G$;
    \item for a target $\sigma$, recover the predecessor chain from $\sigma$ to
          the identity, giving a minimum-weight transposition decomposition;
    \item expand each selected transposition into its topology-legal CNOT block.
\end{enumerate}

\paragraph{Guarantees.}
The output of Global Transpositions is exact for its own restricted model: among
all decompositions of $\sigma$ into logical transpositions with block costs
$w_G$, Dijkstra returns one with minimum total weight.  Moreover, this weight is
always an upper bound on the true topology-constrained CNOT optimum:
\[
    \ell_{\mathrm{CNOT}}^G(\sigma)
    \leq
    \ell_{\mathrm{GT}}^G(\sigma).
\]
Indeed, every weighted edge in $\mathcal{T}_G$ is backed by a concrete
topology-legal CNOT circuit.

The method also dominates optimal SWAP routing expanded gate by gate:
\[
    \ell_{\mathrm{GT}}^G(\sigma)
    \leq
    3\,\ell_{\mathrm{SWAP}}^G(\sigma).
\]
To see this, take an optimal adjacent-SWAP decomposition of $\sigma$.  Each
adjacent SWAP is precisely a transposition $\tau(i,j)$ with $d_G(i,j)=1$ and
weight $3$.  Since Global Transpositions minimizes over all transpositions, it
can reproduce that SWAP path and possibly improve on it by using longer
transpositions with the $6d-4$ identity.

These inequalities explain why this method is a useful middle baseline.  It is
strictly more expressive than SWAP routing, but it still returns to $S_n$ after
each transposition block.  Therefore it cannot represent CNOT-level fusions
between consecutive blocks.  Any residual gap
\[
    \ell_{\mathrm{GT}}^G(\sigma)
    -
    \ell_{\mathrm{CNOT}}^G(\sigma)
\]
or, empirically,
\[
    \ell_{\mathrm{GT}}^G(\sigma)
    -
    \ell_{\mathrm{AlphaCNOT}}^G(\sigma),
\]
is evidence for a phenomenon beyond independent transposition blocks.

\paragraph{Search-space and complexity comparison.}
The three relevant exact-search viewpoints differ mainly in the state space and
in the generator set.  Let $m=|E|$.  In Table \ref{tab:search-space-comparison}, we report the cost of the
most direct exhaustive algorithm for each model, ignoring constant-time details
of state encoding and hash-table lookup.

\begin{table}[t]
\centering
\caption{Naive exact search complexity for the three synthesis models.}
\label{tab:search-space-comparison}
\setlength{\tabcolsep}{3pt}
\resizebox{\linewidth}{!}{%
\begin{tabular}{lccccc}
\hline
Method & State space & Number of states & Generators & Naive algorithm & Worst-case time \\
\hline
Optimal SWAP
& $S_n$
& $n!$
& $|E|$ adjacent SWAPs
& BFS
& $O(n!\,|E|)$ \\
Global Transpositions
& $S_n$
& $n!$
& $\binom{n}{2}$ weighted transpositions
& Dijkstra
& $O\!\left(n!\,n^2\log(n!)\right)$ \\
Optimal CNOT
& $\mathrm{GL}(n,\mathbb{F}_2)$
& $\displaystyle\prod_{k=0}^{n-1}(2^n-2^k)$
& $2|E|$ oriented CNOTs
& BFS
& $O\!\left(|\mathrm{GL}(n,2)|\,|E|\right)$ \\
\hline
\end{tabular}
}
\end{table}

The all-pairs shortest-path distances required to define the transposition
weights can be computed in $O(n(n+|E|))$ time by running BFS from every
vertex. This preprocessing cost is negligible compared with the $n!$-state
search over the permutation space performed by Global Transpositions.

More generally, exact synthesis remains computationally challenging across
the different models considered here. Finding a minimum-length sequence of
adjacent SWAPs on an arbitrary connectivity graph corresponds to the Token
Swapping problem, which is NP-hard~\cite{miltzow_approximation_2016}.
Optimal CNOT synthesis is also computationally hard: hardness results are
known for topology-constrained CNOT synthesis
\cite{Optimal-space-depth-trade-off-cnots}, while CNOT-distance has recently
been shown to be NP-complete even under all-to-all connectivity
\cite{acuaviva_cnot-distance_2026}.

These complexity results motivate the development of more scalable
alternatives to exhaustive search. In this direction, the approximation
techniques developed for permutation decompositions with non-uniform
transposition costs~\cite{farnoud_sorting_2012} provide a natural starting
point for approximate variants of Global Transpositions that avoid explicit
exploration of the full permutation space $S_n$. Determining whether the
specific topology-induced cost function $w_G$ admits stronger approximation
guarantees or more efficient specialized algorithms remains an interesting
open question.

%% file: sections/intermediate_fusion.tex
\section{Beyond Transpositions: Intermediate CNOT Fusion}
\label{sec:beyond_transpositions}

Global Transpositions improves over SWAP routing by changing the elementary
objects used to synthesize a permutation.  Instead of restricting the search to
adjacent SWAPs, it searches over arbitrary logical transpositions and assigns a
topology-dependent CNOT cost to each of them.  The resulting permutation is
therefore implemented as a sequence of independently synthesized
transpositions.
The circuits produced by AlphaCNOT reveal that this construction does not
exhaust the available savings.  In several instances, direct CNOT synthesis
produces a circuit shorter than the complete Global Transpositions construction,
even though the latter has already selected its sequence of transpositions
globally. 

{Global Transpositions is optimal with respect to the decomposition model of transpositions. In particular, we have that given a connectivity map $G=(V,E)$  with $|V|=n$, and a permutation $\sigma\in S_n$, the cost found by GT is
\begin{equation*}
\ell^G_{\mathrm{GT}}(\sigma)
=
\min_{\substack{
    m\geq 0,\;\sigma=\tau_1\tau_2\cdots\tau_m\\
    \tau_r\text{ transposition},\;1\leq r\leq m
}}
\sum_{r=1}^{m} w_G(\tau_r),
\end{equation*}}

\begin{definition}[Intermediate CNOT fusion]
\label{def:intermediate-cnot-fusion}
Let $G$ be a connectivity map and let $\sigma \in S_n$.
We say that $\sigma$ admits an \emph{intermediate CNOT fusion}
(with respect to transpositions) on $G$ if there exists a
topology-legal CNOT circuit $\mathcal{C}$ implementing $\sigma$ such that
\begin{equation}
\label{eq:intermediate-cnot-fusion}
    \ell^G_{\mathcal{C}}(\sigma) < \ell^G_{\mathrm{GT}}(\sigma).
\end{equation}
Any such circuit $\mathcal{C}$ is called a \emph{fusion witness} for $\sigma$ on $G$.
Thus, intermediate CNOT fusion relative to GT occurs whenever
$\sigma$ admits a topology-legal CNOT implementation using fewer
CNOT gates than the minimum-cost concatenation of transposition
blocks considered by the GT model.
\end{definition}

Note that this definition is specifically relative to transpositions, whose elementary
permutation blocks are cycles of length two, i.e., logical
transpositions, with their constructive implementation costs $w_G$ (assuming they are optimal).
The same comparison could possibly be  extended to synthesis models whose block
libraries also include longer cycles (for example, the set of transpositions and 3-cycles, etc.), allowing for a more general definition of \textit{intermediate fusion}. However, while for transpositions we were able to obtain a constructive bound of the cost $w_G$, for longer cycles we would need to develop different tools, or to compute optimal costs of each cycle. For this reason, in what follows we will consider only fusions with respect to transpositions.

\paragraph{Minimal example.}

Consider the line topology $4L$ and the target permutation
\[
    \sigma=[2,3,1,0].
\]
Global Transpositions returns
\[
    \tau(1,2)\,\tau(0,1)\,\tau(1,3),
\]
whose three blocks cost $3+3+8=14$ CNOTs.  The first two blocks are adjacent
transpositions, while the last one is a distance-two transposition implemented
with the construction of Section~\ref{sec:global_transposition}.  AlphaCNOT, when applied directly to the full permutation, returns the
$13$-CNOT circuit shown in Figure~\ref{fig:intermediate-fusion-example}.
For this instance, exhaustive bidirectional BFS over topology-legal CNOT
circuits certifies that $13$ CNOTs are optimal.

\begin{figure}[t]
\centering
\includegraphics[width=\linewidth]{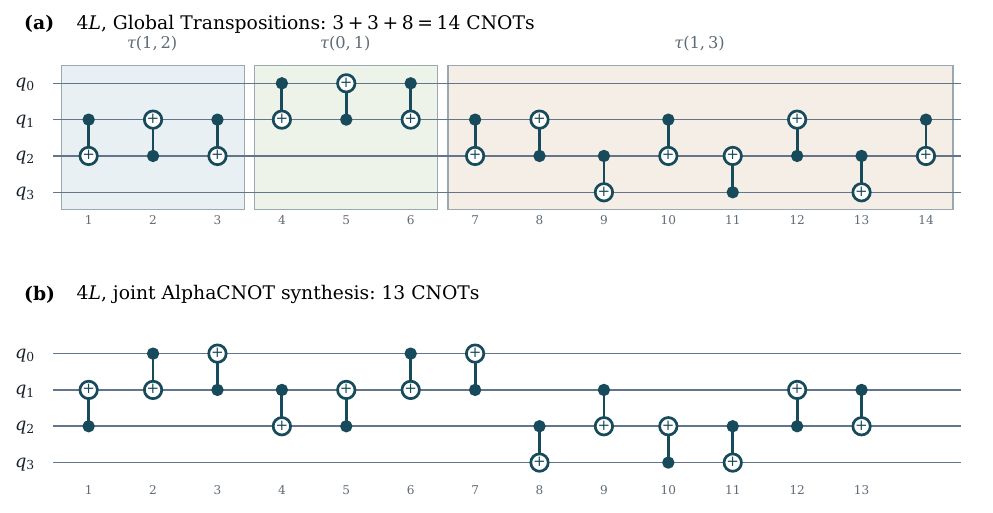}
\caption{Intermediate CNOT fusion on $4L$ for
$\sigma=[2,3,1,0]$.  Global Transpositions realizes three independently
synthesized transpositions using $14$ CNOTs, whereas direct AlphaCNOT synthesis
realizes an optimal decomposition in $13$ gates.}
\label{fig:intermediate-fusion-example}
\end{figure}

This example separates intermediate fusion from the remote-transposition
construction studied in the previous section.  The latter reduces the cost of
one logical transposition.  Here, all three transpositions have already been
assigned the individual costs used by the GT construction; the additional CNOT
disappears only when the complete permutation is synthesized as one circuit.
The improvement is numerically small, but it exposes a limitation of any method
whose total cost is obtained by summing independently synthesized
transposition blocks.

\begin{table}[t]
\centering
\caption{Mean CNOT count over $100$ target permutations per constrained
topology (for $4$ qubits, all $24$ permutations are considered). AlphaCNOT results are obtained with $10$ repetitions. Results for Optimal CNOT are obtained with bidirectional BFS.}
\label{tab:fusion-empirical-by-topology}
\begin{tabular}{crrrr}
\hline
topology & SWAP$\times3$ & Global Transpositions & AlphaCNOT & Optimal CNOT \\
\hline
$4L$  & $8.64$  & $8.18$  & $8.03$  & $8.03$  \\
$4Y$  & $8.04$  & $7.83$  & $7.59$  & $7.59$  \\
$5L$  & $15.03$ & $14.21$ & $13.73$ & $13.71$ \\
$5T$  & $13.50$ & $12.92$ & $12.46$ & $12.42$ \\
$6L$  & $22.59$ & $21.24$ & $20.37$ & -- \\
$6T$  & $19.20$ & $18.28$ & $17.67$ & -- \\
$6Y$  & $18.78$ & $17.86$ & $17.26$ & -- \\
$7L$  & $32.28$ & $30.23$ & $28.97$ & -- \\
$7T$  & $27.45$ & $25.89$ & $25.01$ & -- \\
$7Y$  & $26.40$ & $24.95$ & $24.53$ & -- \\
$8F$  & $37.71$ & $35.34$ & $34.07$ & -- \\
$8H$  & $32.40$ & $30.82$ & $29.92$ & -- \\
$8T$  & $38.67$ & $36.36$ & $35.23$ & -- \\
$8T2$ & $33.27$ & $31.57$ & $30.71$ & -- \\

\hline
\end{tabular}
\end{table}

\paragraph{Full evaluation.}
We now evaluate the average number of CNOTs found by optimal SWAP algorithm, Global Transpositions, and AlphaCNOT over different topologies. For each topology, we randomly select $100$ permutations of the corresponding size (for $4L$ and $4Y$ we consider all the $24$ permutations instead). Results for AlphaCNOT are obtained with a best-of-ten strategy, i.e. we execute the model $10$ times and  select the best solution. In addition, we report the average optimal CNOT count obtained by bidirectional BFS for up to $5$ qubits. We report our results in Table \ref{tab:fusion-empirical-by-topology}.

AlphaCNOT achieves a lower mean CNOT count than Global Transpositions on every
topology in the table.  Across the full set sampled targets, the
mean decreases from $23.85$ CNOTs for SWAP synthesis to $22.55$ for Global
Transpositions, and then to $21.83$ for AlphaCNOT.  Thus, the transposition model
explains an average reduction of $1.31$ CNOTs relative to SWAP synthesis, while
direct synthesis finds a further reduction of $0.72$ CNOTs, or $3.2\%$ relative
to Global Transpositions.  The latter gap is present on every tested constrained
topology and ranges from $0.15$ to $1.27$ CNOTs per target on average.

These results support a two-stage interpretation of the structures exposed by
AlphaCNOT.  Remote-transposition synthesis explains why a logical
transposition need not be implemented as a chain of full SWAPs.  Intermediate
fusion explains the remaining empirical gap: even after the transpositions
have been selected globally and assigned improved individual costs, treating
them as independent blocks leaves additional CNOT reductions unexplored.

\paragraph{Additional examples.}

Finally, we show in Figure \ref{fig:esempi_visualizzati} some examples and visualizations of different permutations, with the solutions found by Global Transpositions and AlphaCNOT. Note that while AlphaCNOT is not optimal, it can find non-trivial CNOT reductions that cannot be obtained by performing optimal synthesis over $S_n$. This suggests that Intermediate CNOT fusions play a major role in obtaining efficient CNOT decompositions of permutation circuits.

\begin{figure}[htbp]
    \centering

    \begin{subfigure}[t]{0.49\textwidth}
        \centering
        \includegraphics[width=\linewidth]{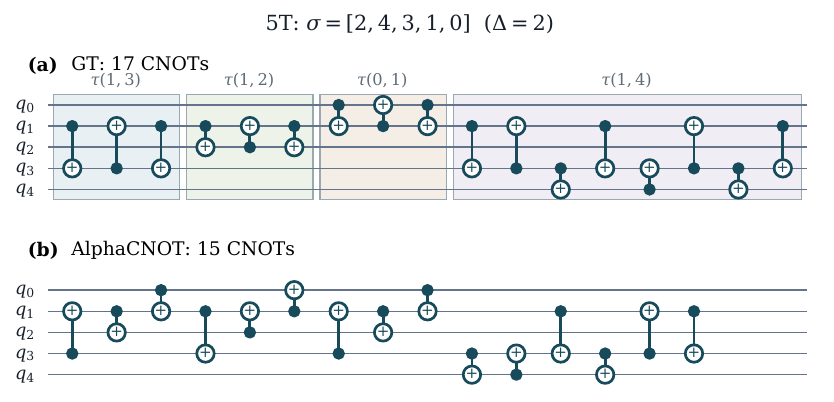}
        \caption{Decomposition of  $[2,4,3,1,0]$ over $5T$.}
        \label{fig:esempio2}
    \end{subfigure}   
    \hfill
 \begin{subfigure}[t]{0.49\textwidth}
        \centering
        \includegraphics[width=\linewidth]{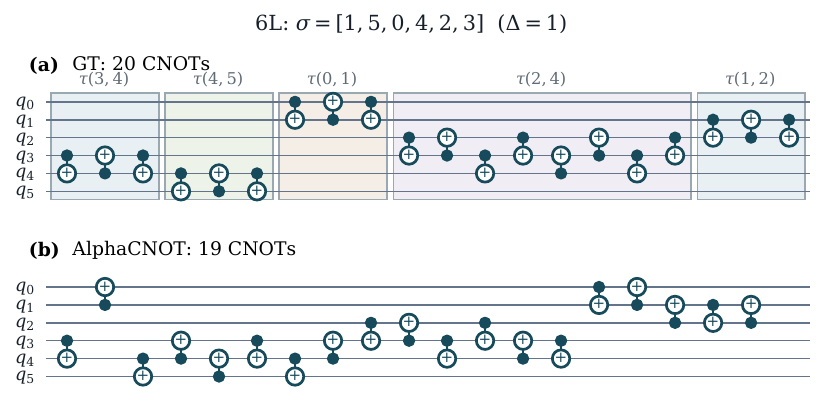}
        \caption{Decomposition of  $[1,5,0,4,2,3]$ over $6L$.}
        \label{fig:esempio1}
    \end{subfigure}

    \vspace{0.7cm}

    \begin{subfigure}[t]{0.49\textwidth}
        \centering
        \includegraphics[width=\linewidth]{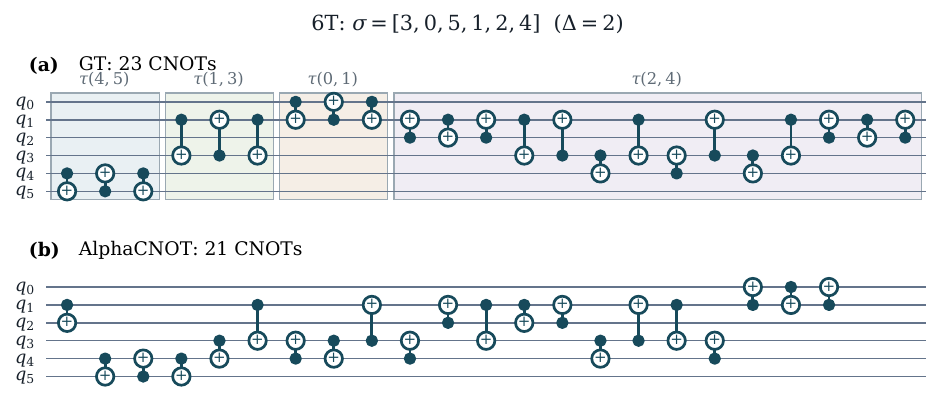}
        \caption{Decomposition of  $[3,0,5,1,2,4]$ over $6T$.}
        \label{fig:esempio3}
    \end{subfigure}
    \hfill
    \begin{subfigure}[t]{0.49\textwidth}
        \centering
        \includegraphics[width=\linewidth]{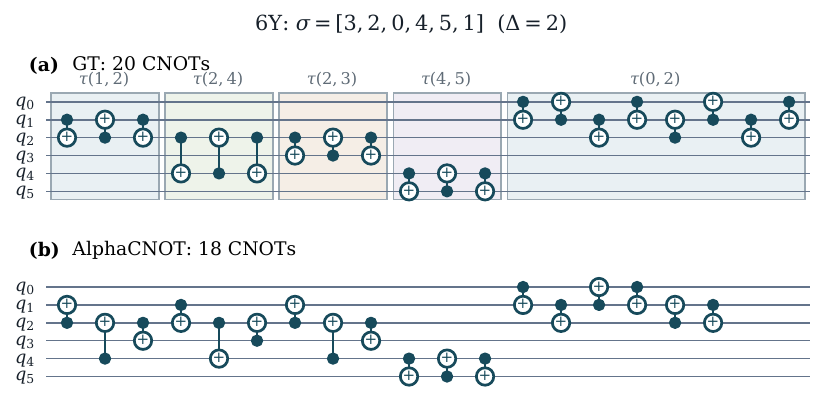}
        \caption{Decomposition of  $[3,2,0,4,5,1]$ over $6Y$.}
        \label{fig:esempio6t}
    \vspace{0.7cm}
    \end{subfigure}
        \begin{subfigure}[t]{0.98\textwidth}
        \centering
        \includegraphics[width=\linewidth]{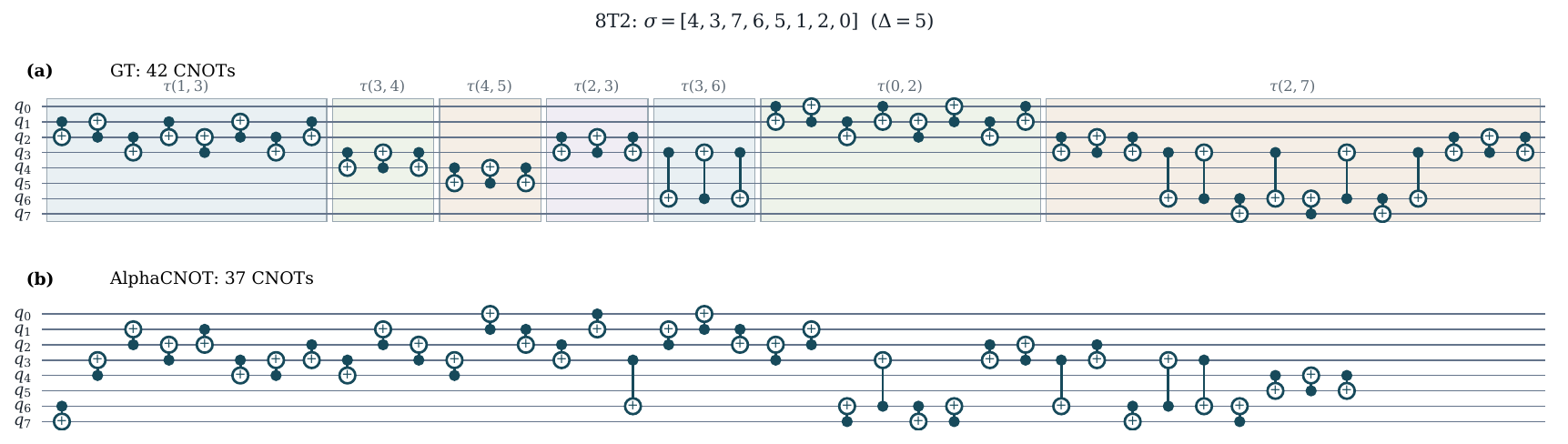}
        \caption{Decomposition of  $[4, 3, 7, 6, 5, 1, 2, 0]$ over $8T2$.}
        \label{fig:esempio8t2}
    \end{subfigure}

    \caption{Visualizations of some decompositions found by Global Transpositions (GT) and AlphaCNOT with non-trivial intermediate CNOT fusions.}
    \label{fig:esempi_visualizzati}
\end{figure}

%% file: sections/discussion.tex
\section{Discussion and Future Directions}
In this work, we used AlphaCNOT to investigate CNOT synthesis for
depth-optimal quantum rotation and permutation circuits. Our analysis revealed two distinct
structures.
First, the circuits produced by AlphaCNOT exposed a recurring
pattern for remote transpositions, which we formalized as a constructive
implementation using at most $6d-4$ CNOT gates for endpoints at graph
distance $d>1$. This result led to \emph{Global Transpositions}, a
deterministic synthesis method that optimizes the decomposition of an
entire permutation into topology-aware logical transpositions.
Beyond this, AlphaCNOT reveals additional reductions that cannot be captured by independently optimized transposition blocks. By exploiting intermediate linear transformations outside the permutation subgroup, it can combine effects across transposition boundaries, a mechanism we call \emph{Intermediate CNOT Fusion}. This shows that the gap between SWAP-optimal and CNOT-efficient synthesis stems not only from cheaper transpositions, but also from the richer intermediate state space available at the CNOT level.

More broadly, these results illustrate a role for AlphaCNOT beyond
black-box circuit optimization. Its generated circuits provide concrete
objects that can be inspected to identify recurring patterns and
formulate structural hypotheses. In the case of remote transpositions,
such a pattern led to a constructive proof and a deterministic synthesis
algorithm; in the case of Intermediate CNOT Fusion, it exposed a
limitation of approaches that restrict their intermediate states to the
permutation subgroup.

Several questions remain open. The most immediate is a theoretical
characterization of Intermediate CNOT Fusion: determining when fusion is
possible, how large the resulting CNOT savings can be, and whether such
opportunities can be identified directly from the target permutation
and the coupling graph. A related question is whether intermediate
non-permutation transformations are necessary for CNOT-optimal
realizations of some permutation circuits, or whether equally short
circuits confined to more restricted classes of intermediate
transformations always exist. More generally, these observations suggest
the possibility of deriving new deterministic synthesis rules from the
patterns exposed by AlphaCNOT and extending the same methodology to
other circuit families and quantum compilation problems. 